\newtheorem{lemma}{Lemma}
\renewcommand{\paragraph}{%
  \@startsection{paragraph}{4}%
  {\z@}{0.5em}{-1em}%
  {\normalfont\normalsize\bfseries}%
}
\title{AutoCorrect: Deep Inductive Alignment of Noisy Geometric Annotations}
\def\ie{\emph{i.e}\bmvaOneDot}
\begin{document}
\maketitle
\begin{abstract}
We propose AutoCorrect, a method to automatically learn object-annotation alignments
from a dataset with annotations affected by geometric noise.  The
method is based on a consistency loss that enables deep neural
networks to be trained,  given only noisy annotations as input, to
correct the annotations.  When some noise-free annotations are
available, we show that the consistency loss reduces to a stricter
self-supervised loss.  We also show that the method can implicitly
leverage object symmetries to reduce the ambiguity arising in
correcting noisy annotations.  When multiple object-annotation pairs
are present in an image, we introduce a spatial memory map that allows
the network to correct annotations sequentially, one at a time, while
accounting for all other annotations in the image and corrections
performed so far.  Through ablation, we show the benefit of these
contributions, demonstrating excellent results on geo-spatial imagery.
Specifically, we show results using a new Railway tracks dataset as
well as the public INRIA Buildings benchmarks, achieving new
state-of-the-art results for the latter.
\end{abstract}

\section{Introduction}\label{introduction}

Digital images are nowadays collected in enormous quantities.  An
important example is geo-spatial data, collected continuously by
satellites, and containing a wealth of information useful for urban
planning, crop and forest management, disaster relief, climate
modelling, and many other applications.  However, the scale of such
datasets requires automated processing via machine learning and, while
machine learning methods are increasingly powerful, providing annotations
manually to train them can be prohibitively expensive.

The annotation costs may be substantially reduced if labels need not be very accurate.
In this case, it is sometimes possible to \emph{recycle} annotations that were not collected specifically for the images at hand.
With geo-spatial data, for instance, there are publicly available maps (e.g.~OpenStreetMap~\cite{OpenStreetMap}, Google Maps~\cite{GoogleMap}) that can provide annotations for large areas of the planet for free.
However, while maps are generally accurate, they usually fail to match satellite images exactly due to various issues.
To list a few:
1)~maps do not capture the 3D structure of features such as buildings or vegetation, leading to misaligned annotations due to viewpoint variations;
2)~maps may not be temporally synchronized with the satellite data, thus failing to account for variations in buildings, roads and vegetation;
3)~features recorded in a map (e.g.~subways) may not necessarily be visible in images and vice-versa.
\Cref{Fig.data} shows examples of noisy geometric labels obtained from these data sources 
in the \emph{INRIA buildings} 
and 
our new \emph{Railway tracks} datasets,  and compares  them with the manually-corrected versions.

\begin{figure*}[t]
\centering
\subfigure[]{\includegraphics[width=0.23\textwidth]{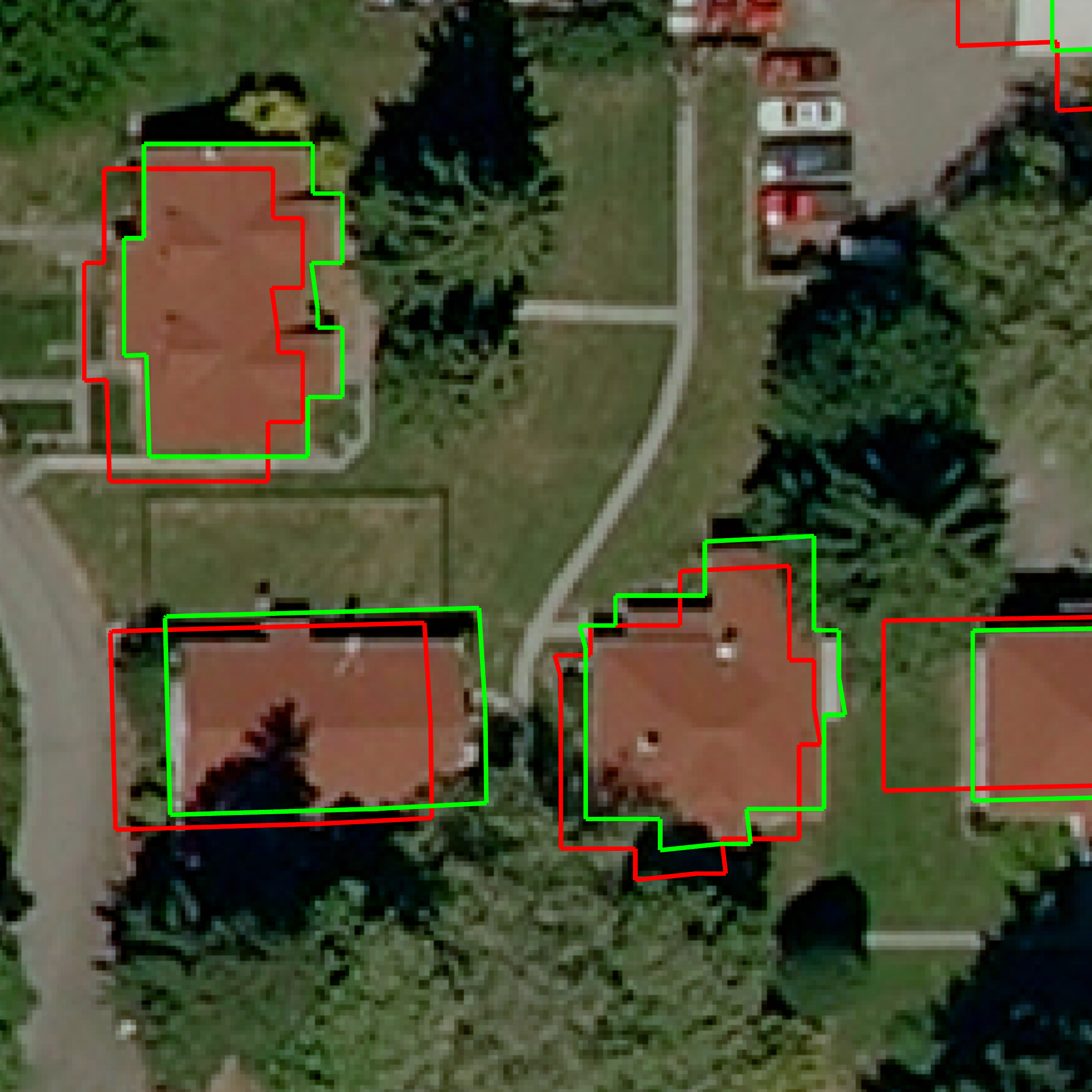}\label{Fig.im}}
\subfigure[]{\includegraphics[width=0.23\textwidth]{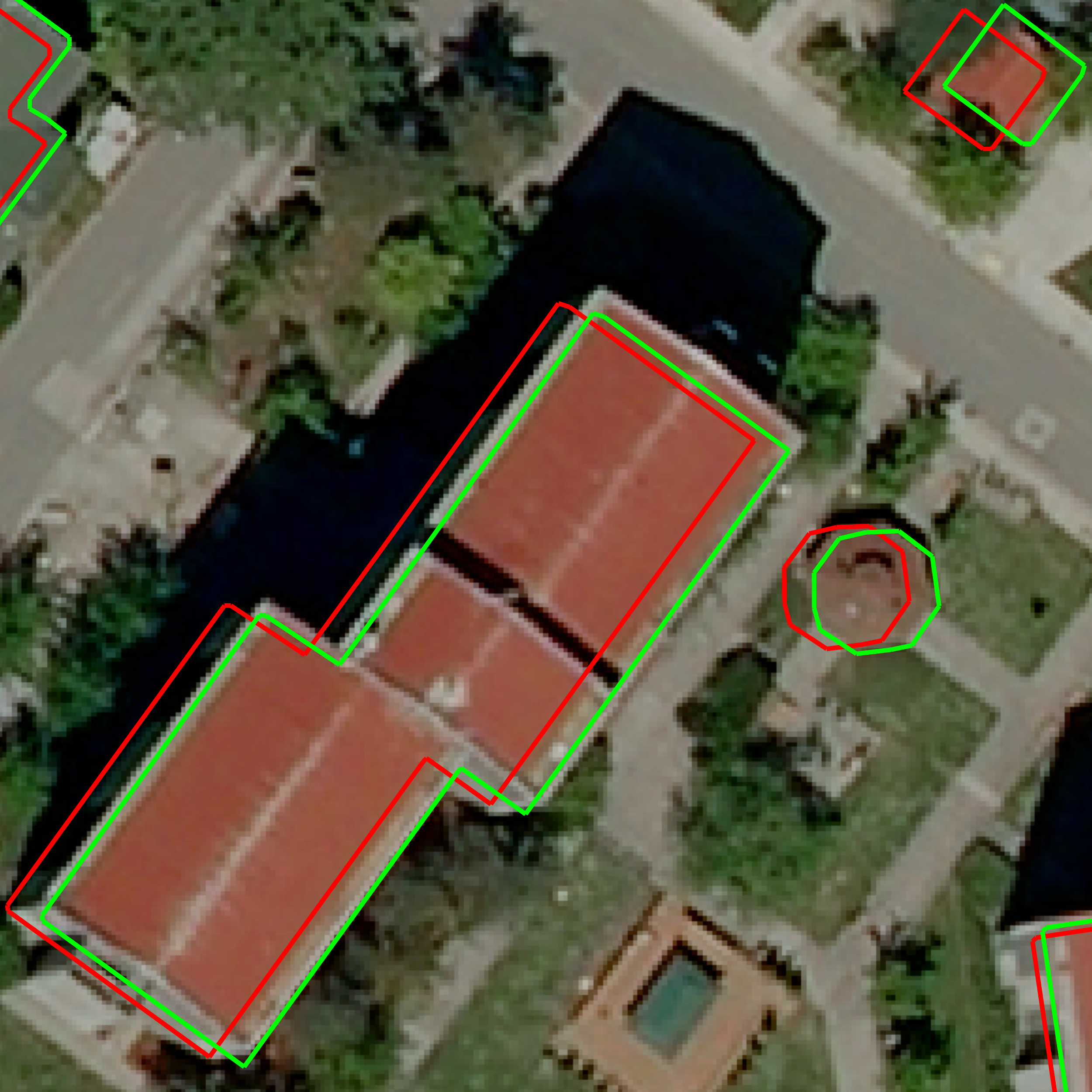}\label{Fig.ta}}
\subfigure[]{\includegraphics[width=0.23\textwidth]{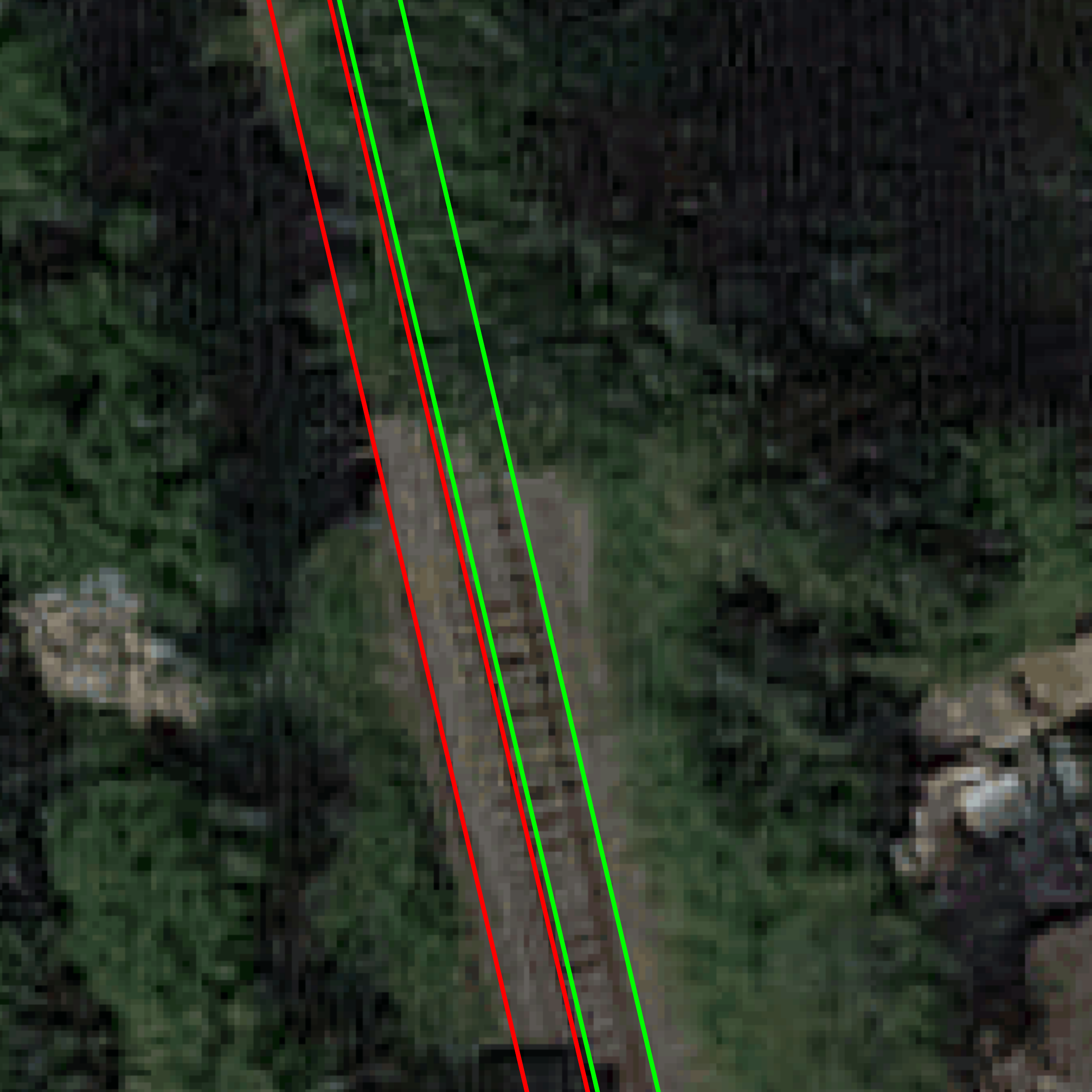}\label{Fig.tac}}
\subfigure[]{\includegraphics[width=0.23\textwidth]{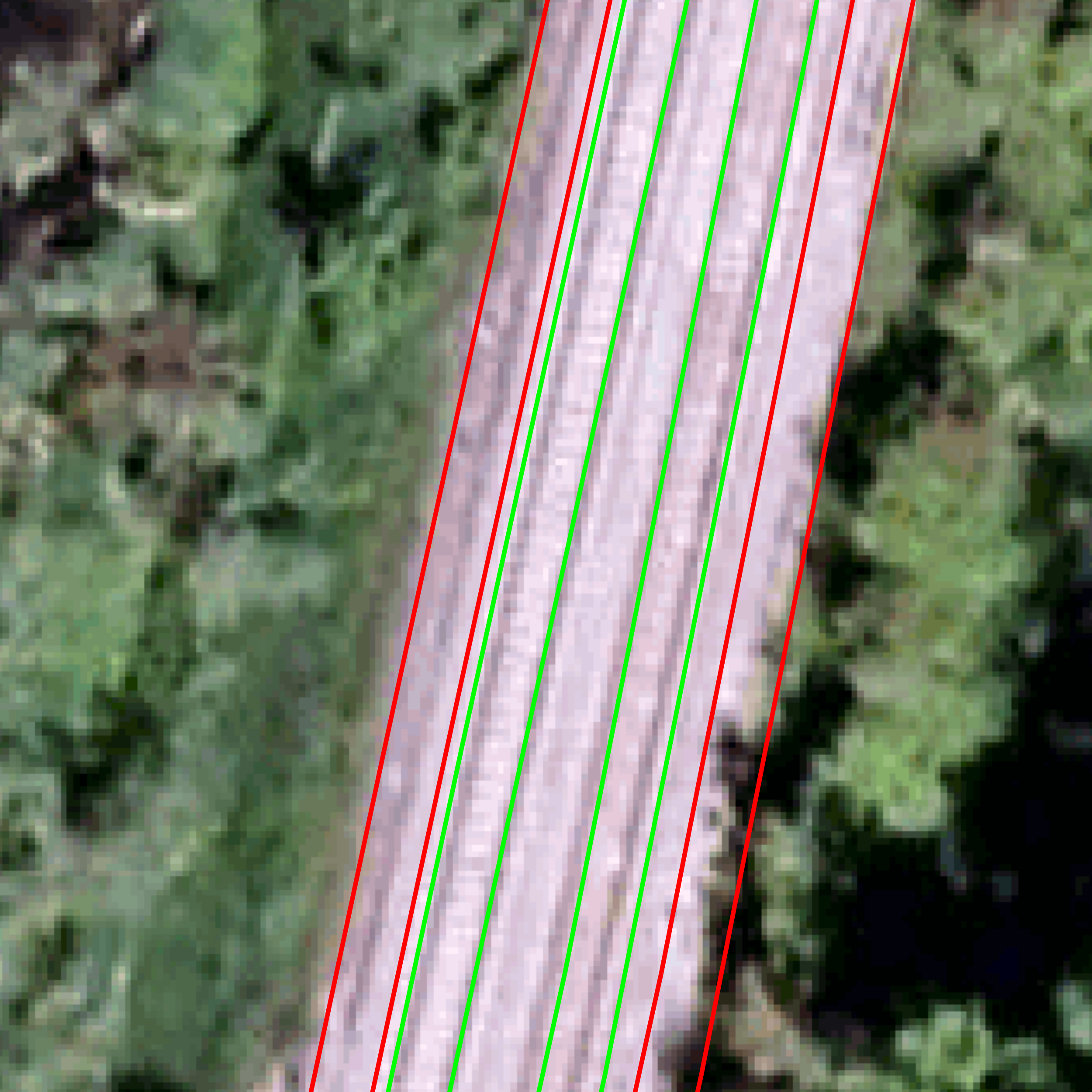}\label{Fig.im2}}
\vspace{-1.5em}
\caption{Example aerial images with noisy labels (Red) and accurate labels (Green).
(a) and (b) are extracted from the \emph{INRIA buildings} dataset. (c) and (d) are examples in the \emph{Railway tracks} dataset.
The original labels~(Red) demonstrate the clear registration noise. 
The cleaned labels (Green) show the corrections we aim to achieve (Human corrected).}\label{Fig.data}        
\end{figure*}

Noisy labels can severely impact the quality of learned object detectors, as shown in  satellite/aerial segmentation~\cite{Mnih12,Saito16,Alshehhi17}~and detection~\cite{Laptev2000,Hu2007}.
Hence, in this paper, we consider the problem of improving noisy labels to reduce or eliminate the impact of such noise on learned models.
Our method, \emph{AutoCorrect}, is mostly concerned with registration noise, which is usually the predominant noise type in geo-spatial data~(\Cref{Fig.data}).
We build a model that takes a set of images and misaligned object annotations as input and shifts the annotations to their correct image locations.

There are several challenges.
Satellite images usually contain multiple occurrences of the same object types, which may lead to association errors.
Geo-spatial images capture the top of tall objects such as building and trees, whereas maps annotate their base.
Finally, tall objects (e.g.\ trees in Figure~\ref{Fig.tac} or buildings) can occlude other objects or cast significant shadows,
so that some objects annotated in the map may effectively be invisible.

Given an image and a set of object annotations,  \emph{AutoCorrect} sequentially registers each annotation to its corresponding object occurrence by estimating an instance-level transformation.
This is much more flexible than existing works that seek a single image-level transformation and allows us to obtain substantial improvements compared to these (indeed, as will be seen in the results, the annotations are displaced independently per object, and a single image-level correction will not suffice).
However, this comes with several challenges.
First, the model may not have access to any noise-free annotation, or at least not be aware of which ones are noise-free, making the correction process ambiguous.
Second, there usually are several objects in each image, which means that the model must generalise to an arbitrary number of object occurrences whilst avoiding errors due to duplicate associations.

We solve the first problem by combining a geometric consistency loss, which is valid even if the ground-truth annotations are unknown, with a self-supervised loss, which is reliable for annotations with a small amount of noise.
We also show that the symmetry of certain objects such as roads provides an implicit constraint that makes registering annotations much less ambiguous.
We solve the second problem by introducing a \emph{spatial memory map} which represents all image annotations and reflects all previously-applied corrections.

\section{Related work}
\label{Related work}
\paragraph{Image alignment.}
Two very related works~\cite{Girard18,Zampieri18} have shown good alignment performance by training a CNN
to predict a displacement field between a map and an image. \cite{Zampieri18} uses a 
multi-scale CNN, and~\cite{Girard18} improves performance by training jointly for both alignment and segmentation.
We compare to their results (and improve over them) in Section~\ref{experiments}. 

\paragraph{Inductive models and spatial memory.}
Explicit decomposition into repeated sub-tasks and recursively
solving the problem have been applied in neural
programming~\cite{Zaremba16,Cai17,Reed16a} and many visual
tasks~\cite{Li18,Romera-Paredes15,Kowalski17,Carreira15,Oberweger15,Gupta18}.
In~\cite{Kowalski17}, each stage predicts a landmark transformation that
updates the keypoints iteratively.  Similarly, an updater function is
formulated in~\cite{Oberweger15} for hand pose alignment.  
\cite{Gupta18} proposes an
inductive RNN to localise visual objects which can generalise to an
arbitrary number of inputs. Many of these methods use a form of spatial memory, though this
isn't always made explicit. Others have used spatial memory for 
interactive image
segmentation~\cite{Li18},  and context reasoning in object detection~\cite{Chen17a}.

\paragraph{Cycle consistency.}
Assessing performance via cycling between two or more samples is a
commonly used technique in computer vision. Many successful tasks like
optical flow (with forward-backward consistency)~\cite{Sundaram10}, co-segmentation~\cite{Wang14}, image
matching~\cite{Zhou15a,Zhou16}, image translation~\cite{Zhu17}
and domain adaptation~\cite{Hoffman18} have shown its
effectiveness. 
We introduce here a geometric-consistency loss: that within an image,
misaligned annotations should be
able to transform back to a single unique position.

\paragraph{Learning with imperfect annotation.}
Most works on learning with imperfect annotations have considered
classification, rather than registration. Examples include having a
small set of clean samples (as well as many
noisy)~\cite{Xiao15,Veit17}, using robust loss
functions~\cite{Ghosh17,Patrini17},  or using a top-k
loss~\cite{Berrada18}.

\vspace{-9pt}
\section{Approach}
\label{Map Feature Extraction}

Our goal is to train deep networks for the detection of visual objects while relying on noisy annotations.
While the approach is fairly general, we apply it to the detection of objects such as building and roads in geo-spatial images, where noisy annotations can be extracted from on-line data repositories such as mapping services.
The mismatch between annotations and images is sometimes large, as shown in \Cref{Fig.data}.
Na\"\i{}vely training a model with these annotations leads to inaccurate predictions.

\begin{figure*}[t]
\centering\includegraphics[width=0.9\textwidth]{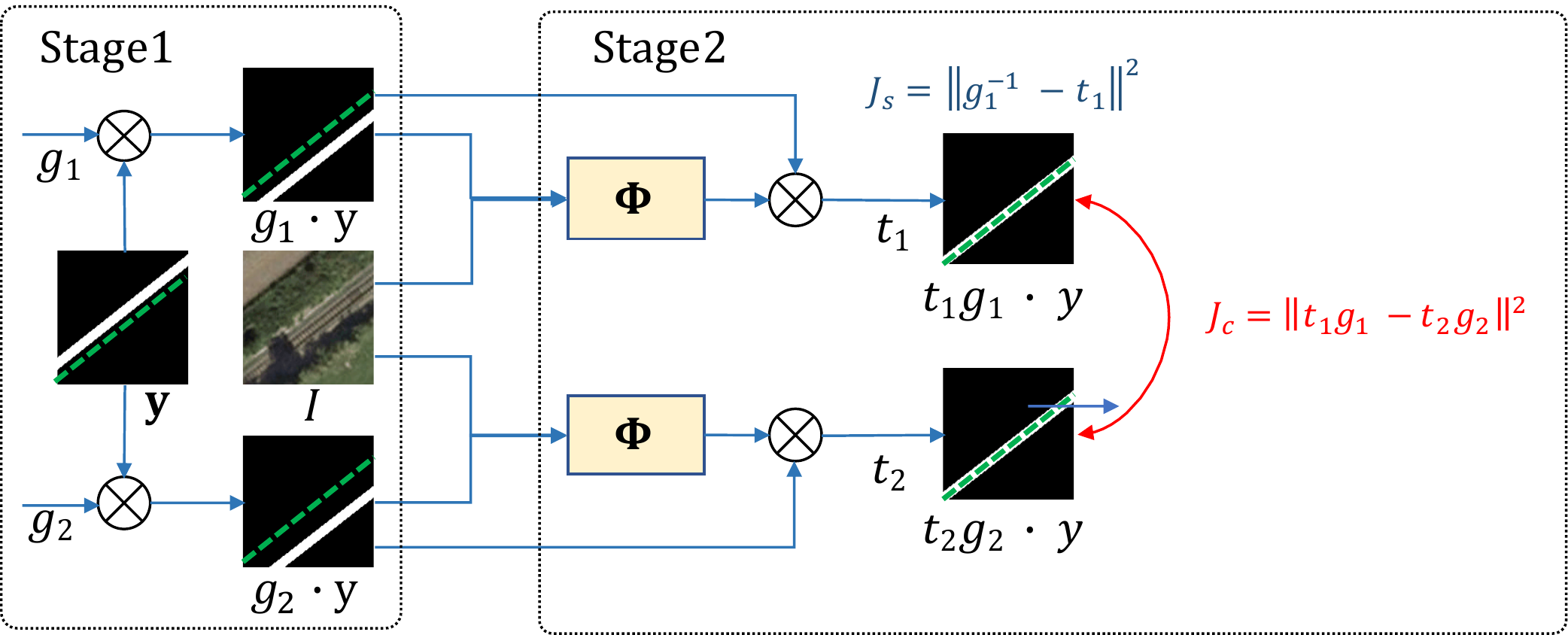}
\vspace{-1.2em}
\caption{AutoCorrect architecture.
The green dotted line shows the ground-truth label for the example image of a railway track.
In Stage~1, given an image-label pair $(I,y)$, the noisy annotation $y$ is further perturbed by applying random transformations $g_1$ and $g_2$.
In Stage~2, the network $\Phi$ computes corrections $t_i = \Phi(I, g_i\cdot y)$, $i=1,2$, producing corrected labels $(t_i g_i) \cdot y$ which must satisfy the consistency equation $J_c=0$ (see text).
If $y$ is known to be a noise-free, then we can set $g_2 = t_2 = 1$ reducing $J_c$ to the stricter constraint $J_s=0$.}\label{Fig.consistency}
\end{figure*}

There are two main challenges. First, all annotations are potentially noisy and thus it is not clear how the noise can be identified and removed.
Second, as different objects in the image may be misaligned in different ways, we must enable instance-level corrections while handling an arbitrary number of object instances per image.
We address these challenges in three ways.
First, we use a self-supervised consistency loss based on the fact that multiple perturbations of the same label must always map to the same noiseless label.
Second, we show that the intrinsic symmetry of certain visual objects provides a powerful implicit constraint that can reduce the ambiguity in the annotation clean-up process.
Third, we introduce the idea of inductive alignment, adjusting annotations one instance at a time, sequentially, keeping track of the algorithm state by means of a \emph{spatial memory map}.
This is implemented by a recurrent neural network (RNN), which applies the same alignment logic to each annotation, but accounting for annotations already processed.

\subsection{Single instance alignment}\label{s:single}

We start by describing a neural network architecture that can predict a translation and rotation for an individual object annotation in order to better align it to the image content. 
Note that, while this task may sound similar to object detection, it is in fact much easier as the annotation cues us to the \emph{existence} and rough location of an object.

At each step, the input to the model is a concatenation of the RGB image $I\in\mathbb{R}^{3\times H\times W}$ with a scalar label map $y\in\{0,1\}^{H\times W}$ which encodes the annotation as a binary image.
We know that the annotations can potentially be noisy, so we wish to learn a predictor function that outputs the transformation (\ie 2 scalars for translation and 1 for rotation) to align the image and annotation.
This is implemented using a CNN that takes as input $I$ and $y$ and outputs a transformation $t$:
\begin{align}\label{e:model}
t = \Phi(I,y).
\end{align}
The corrected annotation $\hat{y} = t \cdot y$ is expressed as the transformed version of the annotation $y$ by the predicted transformation $t \in G$, where $G$ is a group of transformations $\mathbb{R}^2\rightarrow\mathbb{R}^2$ such as 2D similarities.
The symbol $\cdot$ denotes warping an image by a transformation.
If the annotation is noise-free, $t$ is expected to be an identity matrix and $\hat{y} = y$. If the annotation is noisy, 
the corrected annotation $\hat{\mathbf{y}}$ should approximate the underlying noise-free annotation $\mathbf{y}_{\text{gt}}$, which however is never observed during training.

Model~\eqref{e:model} has several useful geometric properties:
\begin{lemma}\label{l:one}
If $y_\text{gt}$ is the ground-truth annotation for image $I$ and a perfect $\Phi$ is available, then
$
\Phi(I,y_\text{gt})= \mathbf{1}
$
is the identity transformation. Furthermore, for all invertible transformations $g \in G$, we have
$
\Phi(g\cdot I,y) = g \Phi(I,y)
$
and
$
\Phi(I,g\cdot y) = \Phi(I,y) g^{-1}.
$
\end{lemma}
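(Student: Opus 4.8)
The plan is to read ``a perfect $\Phi$'' as a predictor that always recovers the correct annotation, \ie $\Phi(I,y)\cdot y = y_\text{gt}$ for every admissible input $y$, where $y_\text{gt}$ is the unique noise-free annotation of $I$. All three claims then reduce to algebra, relying on three facts: this defining identity; the covariance of the ground truth under image warps, $y_\text{gt}(g\cdot I)=g\cdot y_\text{gt}(I)$ (warping the image warps the correct annotation identically); and the fact that $\cdot$ is a left group action, so $(ab)\cdot y = a\cdot(b\cdot y)$.

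First I would establish the identity claim by substituting $y=y_\text{gt}$ into the defining identity, which gives $\Phi(I,y_\text{gt})\cdot y_\text{gt}=y_\text{gt}$. Thus $\Phi(I,y_\text{gt})$ fixes $y_\text{gt}$, and provided the action on $y_\text{gt}$ has trivial stabiliser it must equal $\mathbf{1}$.

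Next, for image equivariance I would apply the defining identity at the warped image to get $\Phi(g\cdot I,y)\cdot y = y_\text{gt}(g\cdot I)$, rewrite the right side via covariance as $g\cdot y_\text{gt}(I)=g\cdot(\Phi(I,y)\cdot y)$, and fold it with the left-action property into $(g\,\Phi(I,y))\cdot y$. Matching the two sides and cancelling the common $y$ yields $\Phi(g\cdot I,y)=g\,\Phi(I,y)$. For annotation equivariance I would instead feed the perturbed annotation $g\cdot y$ into the defining identity, so $\Phi(I,g\cdot y)\cdot(g\cdot y)=y_\text{gt}=\Phi(I,y)\cdot y$; the left-action property turns the left side into $(\Phi(I,g\cdot y)\,g)\cdot y$, and cancelling $y$ gives $\Phi(I,g\cdot y)\,g=\Phi(I,y)$, \ie $\Phi(I,g\cdot y)=\Phi(I,y)\,g^{-1}$.

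The one delicate step, and the only genuine obstacle, is the cancellation of the shared factor $y$ in the last two arguments: it is legitimate precisely when the action is injective on $y$, \ie the annotation has trivial stabiliser. When the depicted object is symmetric (\eg a straight road, which the paper later exploits), the stabiliser is nontrivial and the equivariance relations hold only modulo that stabiliser subgroup; I would flag this caveat so the identities are read as equalities of cosets in the symmetric case.
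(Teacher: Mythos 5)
Your proposal is correct and follows essentially the same route as the paper, whose entire proof is the one-line observation that $g \cdot y_\text{gt}$ is the ground-truth label of $g \cdot I$; you simply make that covariance argument explicit via the defining identity $\Phi(I,y)\cdot y = y_\text{gt}$ and the left-action property. Your added caveat about the cancellation of $y$ requiring a trivial stabiliser is a genuine refinement the paper glosses over, and it dovetails nicely with the paper's own Lemma~2 on symmetries.
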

The lemma is easy to prove once we note that, if $y_\text{gt}$ is the ground-truth label of image $I$, then $g \cdot y_\text{gt}$ is the ground-truth label of image $g \cdot I$.
From this lemma, we can also see that any annotation that can be recovered from an image must have the same \emph{symmetries} as the image itself.

\begin{lemma}
Let $\hat{y} = \Phi(I,y) \cdot y$ be the annotation reconstructed from image $I$ using model~\eqref{e:model} and assume that $m\in G$ is a symmetry of the image, \ie $I = m \cdot I$.
Then the reconstructed annotation has the same symmetry, in the sense that $\hat{y} = m \cdot\hat{y}$.
\end{lemma}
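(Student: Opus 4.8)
The plan is to reduce everything to the image-equivariance identity from Lemma~\ref{l:one}, combined with the fact that warping is a genuine left group action, so that $m\cdot(t\cdot y)=(mt)\cdot y$ for every $t\in G$. First I would expand the quantity we want to simplify: since the action composes, $m\cdot\hat{y} = m\cdot(\Phi(I,y)\cdot y) = (m\,\Phi(I,y))\cdot y$. The entire argument then hinges on showing that the prefactor $m\,\Phi(I,y)$ collapses to $\Phi(I,y)$ itself.

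To establish this, I would invoke the second property of Lemma~\ref{l:one}, namely $\Phi(g\cdot I, y) = g\,\Phi(I,y)$, specialised to $g=m$. Because $m$ is assumed to be a symmetry of the image, $m\cdot I = I$, so the left-hand side reduces to $\Phi(I,y)$, yielding $\Phi(I,y) = m\,\Phi(I,y)$. Substituting back gives $m\cdot\hat{y} = (m\,\Phi(I,y))\cdot y = \Phi(I,y)\cdot y = \hat{y}$, which is exactly the claimed symmetry. Note that no appeal to the first or third parts of Lemma~\ref{l:one} is required, and the conclusion is indifferent to whether $y$ is noise-free.

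The computation is short; the one point that genuinely needs care is the bookkeeping of left versus right composition. One must confirm that the warping action satisfies $m\cdot(t\cdot y)=(mt)\cdot y$, placing the newly introduced factor $m$ on the \emph{left} of $\Phi(I,y)$, precisely where the image-equivariance identity deposits it. I expect this to be the only (minor) obstacle: had the symmetry interacted with $\Phi$ through the label slot instead of the image slot, the third identity $\Phi(I,g\cdot y)=\Phi(I,y)g^{-1}$ would introduce an inverse on the right and the factors would not telescope. Once the order of composition is pinned down so that the equivariance term cancels cleanly, the result follows immediately.
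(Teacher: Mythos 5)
Your proof is correct and is essentially the paper's own argument read in reverse: both rely on the image symmetry $I = m\cdot I$ together with the equivariance identity $\Phi(g\cdot I,y)=g\,\Phi(I,y)$ from Lemma~\ref{l:one} (with $g=m$) and the associativity of the warping action to move $m$ to the left of $\Phi(I,y)$. Your added remark about left-versus-right composition is a sensible sanity check but does not change the substance of the argument.
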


\begin{proof}%
\hfill %
$
\hat{y}
= \Phi(I,y) \cdot y
= \Phi(m\cdot I,y) \cdot y 
= m \Phi(I,y) \cdot y
= m  \cdot \hat{y}.
$\hfill%
\end{proof}

This lemma shows that annotations can be predicted from images only if they have the same symmetries as the images.
For example, if the model labels a straight road with a line, then the line must coincide to the road axis of symmetry.
Hence image symmetries implicitly constrain the predictor~\eqref{e:model} (in the example of the road, the correction must move the line onto the visual axis of symmetry of the road), reducing the ambiguity in registering the annotation.
Note that this effect does not require specific images to be exactly symmetric; rather, it suffices that the object category is \emph{statistically} symmetric (for example it is not possible to tell the direction of a road even if there are a few trees on one side, making the image asymmetric).

If we assume all annotations are correct, \ie $y = y_{gt}$, 
then \Cref{l:one} can be used to train model~\eqref{e:model} via self-supervised learning.
The idea is to perturb the noise-free annotations synthetically by applying a random transformation $g \in G$ to the annotation $y = y_\text{gt}$.
From~\Cref{l:one}, and using the assumption $y = y_\text{gt}$, we have
$
\Phi(I, g \cdot y) = \Phi(I,y) g^{-1} = 1 g^{-1} = g^{-1}.
$
We may capture this constrain in the {\em self-supervised loss}:
\begin{equation}\label{e:loss1}
J_s = \|g^{-1} - \Phi(I, g\cdot y)\|^2
\end{equation}
However, in our case $y_{gt}$ is unknown so this loss can be used only as an approximation.
In this case, the constraint can be written in term of \emph{relative} transformations.
To this end, consider applying two random transformations $y_1 = g_1 \cdot y$ and $y_2 = g_2 \cdot y$ to the annotation $y$.
From~\Cref{l:one}, we have
$
\Phi(I,g_1\cdot y) g_1 = \Phi(I,y)  = \Phi(I,g_2\cdot y) g_2.
$
This can be written as a {\em consistency loss}:
\begin{equation}\label{e:loss2}
J_c = \|t_1 g_1 - t_2 g_2\|^2,
\quad
t_1 = \Phi(I,g_1\cdot y),
\quad
t_2 = \Phi(I,g_2\cdot y).
\end{equation}
Intuitively, when two random transformations operate on one annotation, an ideal alignment model should be able to transform the annotation back to the same position, as $y_{gt}$ is unique.

Overall, to train models on noisy data, we therefore consider a weighted combination $J = \alpha_s \cdot J_s + \alpha_c \cdot J_c$ (details in \Cref{sec:training}).

\subsection{Inductive alignment}\label{regress}

\begin{figure*}[t]
\centering\includegraphics[width=1.0\textwidth]{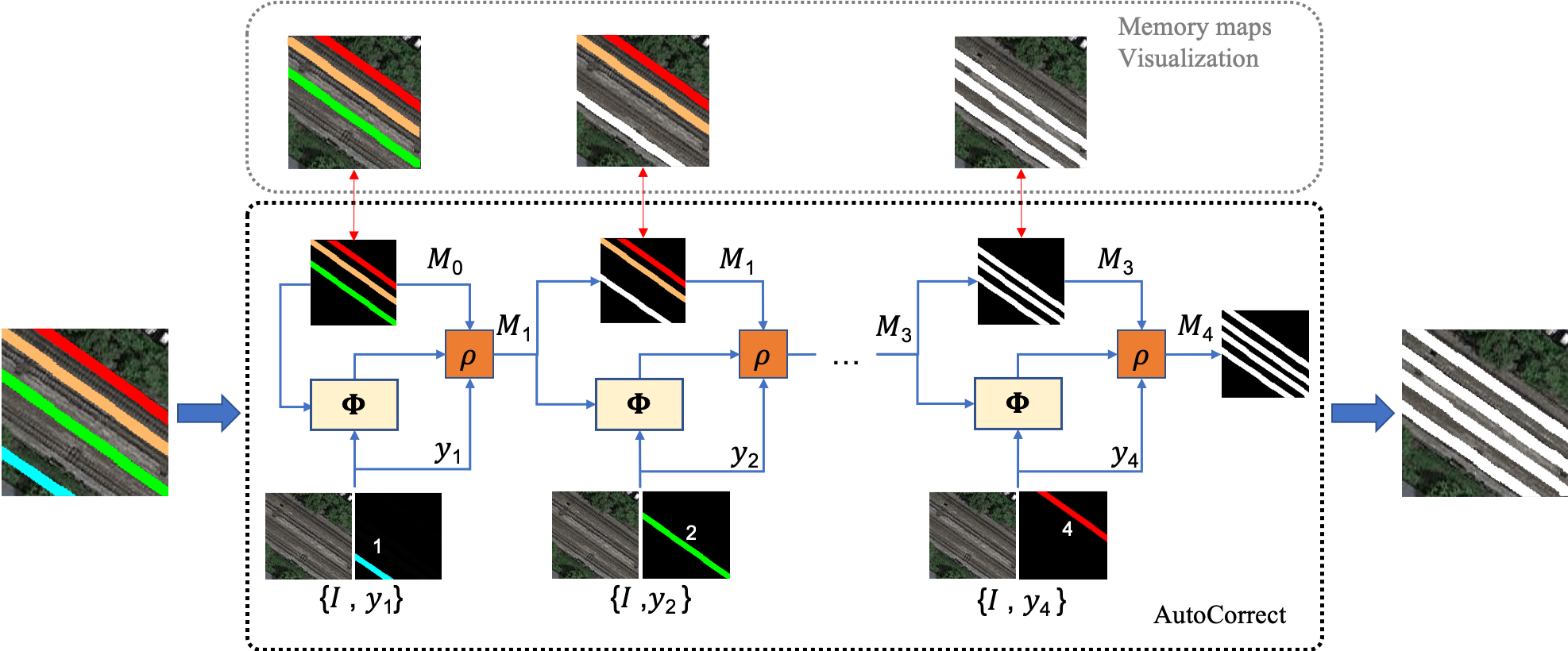}
\vspace{-2em}
\caption{Correcting annotations sequentially using a memory map.
Note, we demonstrate our correction process in the \emph{AutoCorrect} box, whereas the top \emph{Memory maps Visualisation} box highlights the corrected annotation (white) on the satellite image.
In detail, at each step, the input to the network is the concatenation of the RGB image $I$, the image of the annotation $y_i$ to be corrected, and a memory $M_{i-1}$ representing all other annotations, part of which have already been corrected (we colour-code annotations not yet corrected).
An update function $\rho$, is applied to obtain the correction $y_i \mapsto t_i \cdot y_i$ and the latter is used to update $M_{i}$.
}\label{Fig.net}
\end{figure*}

A na\"\i{}ve implementation of model~\eqref{e:model} may align single object instances well, but it would fail when an image contains multiple object occurrences, especially when, as in satellite images, these are spatially close and similar in appearance.
In particular, independent alignment may cause different noisy annotations to be incorrectly associated to the same object occurrence.
To tackle this challenge, we introduce an inductive alignment model which uses an external \emph{spatial memory map} to make the algorithm aware of all annotations present in the image as well as to keep track of all correction processed so far.
Formally, given a training image $I$ with $n$ object annotations $\mathbf{y}=(y_1,\dots,y_n)$, our goal can be seen as estimating the joint posterior density of the transformation matrix for all the noisy object annotations~$p({\{t_i\}}_{i=1}^{n}|I, \mathbf{y})$.
Rather than modelling multiple object annotations simultaneously, we break this down as \emph{sequence} of simpler steps, in which a single transformation is predicted at a time, conditioned on the previous decisions, resembling an autoregressive model.
Formally, this autoregressive model can be written as:
$
P(t_1, t_2, \ldots, t_n| I, \mathbf{y}) =
P(t_1 | I, \mathbf{y})
P(t_2 | I, t_1, \mathbf{y})
\cdots
P(t_n | I, \{t_i\}_{i=1}^{n-1},\mathbf{y}).
$
Note that this process requires learning a sequence of models $P(t_i | I, t_1,\dots,t_{i-1}, \mathbf{y})$.
Directly parameterising the relations among transformations is difficult and results in a model which is rather opaque;
instead, we propose to summarise the effect of conditioning on the previous corrections $t_1,\dots,t_{i-1}$ via a \emph{spatial memory map} $M_{i-1}$, 
ideally, the memory map should represent all annotations and 
corrections performed so far \emph{except} the annotation $y_i$ that is currently being processed,
formally:
\begin{equation}
P(t_i |I, t_1,\dots,t_{i-1},\mathbf{y})=P(t_i |I,M_{i-1},y_i), \qquad
M_i = M_{i-1} + t_{i} \cdot y_{i} - y_{i+1}.
\end{equation}

An explicit example is illustrated in~\Cref{Fig.net}, showing four railway track annotations to be corrected.
The algorithm starts with four binary masks, each coding one of the noisy railway annotations,
at the very \emph{first} step, 
the memory $M_0$ is composed of three annotations~($y_2, y_3, y_4$),
and $y_1$ is concatenated as additional input to the network $\Phi$.
Then, the first annotation $y_1$ is corrected by predicting the rigid transformation $t_1$, 
and the memory $M_1$ is updated by adding the image of $t_1 \cdot y_1$ and removing the image of annotation $y_2$, 
readying for the next cycle.
The induction process ends at $M_4$, where all tracks have been effectively corrected by the model.
Note that the order we align instances is from left to right and bottom to top. 

\subsection{Implementation details}
\label{sec:training}

Consider a training image $I$ with $n$ noisy object annotations $\mathbf{y}=(y_1,\dots,y_n)$.
Annotations are perturbed by applying random transformations $g \cdot y_i$ where $g$ is the composition of a translation of up to $25$px in each direction and a rotation of up to 5 degrees (clockwise or anticlockwise) as this was found to be commensurate to the maximum amount of noise in the geo-spatial datasets we used for assessment.

During early training, we set the gating parameters in the joint objective function $J$ as $\alpha_s = \alpha_c = 1$.
This ensures the model converges quickly to an approximate solution within a few pixels of the ground-truth annotation, despite the fact that annotations are noisy so that term $J_s$ in the objective function is not exactly valid.
In a second phase, when the model is close to the final solution, the terms $J_s$ and $J_c$ start to be in conflict for the annotations that contain the largest amount of noise.
Hence the coefficient $\alpha_s$ and $\alpha_c$ are adjusted as follows:
\begin{align}\label{equation_gating}
\begin{cases}
\alpha_s =0,~\alpha_c =1 & \text{if
$
\min(
\operatorname{IoU}(t_1 g_1 \cdot y,y), 
\operatorname{IoU}(t_2 g_2 \cdot y,y)) < 0.2
$
},  \\
\alpha_s =1,~\alpha_c =0 & \text{otherwise},\\
\end{cases}
\end{align}
where $\operatorname{IoU}$ denotes the standard \emph{Intersection over Union} measure.
This states that when any of the predicted corrections is far away from the given label, the label is expected to contain a large amount of noise, only the consistency loss $J_c$ is applied; otherwise only the stricter self-supervised loss $J_s$ is used.

\paragraph{Architecture and optimisation.}

The proposed \emph{AutoCorrect} model uses as backbone architecture the VGG-M network~\cite{Chatfield14} with minor modifications (details in the supplementary materiel).
The network is trained using the Adam optimiser at an initial learning rate of $10^{-4}$, which is divided by $10$ after the training error plateaus.

\vspace{-3pt}
\section{Experiments}\label{experiments}

The experiments thoroughly assess our \emph{AutoCorrect} method on two benchmark datasets:
our own Railway tracks dataset and the INRIA buildings dataset.
The new Railway tracks dataset will be released at \url{http://robots.ox.ac.uk/~vgg/research/autocorrect/}.
\subsection{Datasets and evaluation}
\paragraph{Railway tracks dataset.}
The \emph{Railway tracks} dataset was obtained by extracting views of railways in the UK region from Google Maps.
We used zoom level $19$, which corresponds to approx.~$0.5$ meter/pixel (this is the minimum zoom level at which railway tracks can be resolved) and results in images with a $640 \times 640$ pixel resolution.
The dataset contains approximately $35$k overhead images of the tracks.
Binary mask annotations are provided by Google Maps to indicate the position of the railway tracks;
however, the annotations are not perfectly aligned with the images~(shown in~\Cref{Fig.data}).
In order to evaluate the effectiveness of the
self-supervised learning loss, the consistency loss, and the spatial
memory map, we  manually identify $4,\!000$ images for which
railway annotations are accurate. We use these in the experiments by synthetically adding noise to these ground-truth
annotations.

\paragraph{INRIA buildings dataset.}
The \emph{INRIA buildings} dataset contains 360 images of $5,\!000 \times 5,\!000$ pixels.
This dataset may seem small compared to other deep learning datasets, but as each image has a large spatial footprint, it contains a large number of buildings ($13,\!614$ buildings just in the test split).
In order to directly compare with prior work, we adopt the same data and evaluation protocol of~\cite{Girard18}.

\paragraph{Evaluation metrics.}

In order to evaluate the effectiveness of the proposed \emph{AutoCorrect} model,
For \emph{Railway tracks} dataset, we assess railway alignment using the standard IoU measure between the image of a noise-free label and the predicted correction of a noisy label.
For the \emph{INRIA buildings} dataset, in order to compare with existing work, we adopt the standard protocol and report results using the \emph{Percentage of Correct Keypoints}~(PCKs) metric.
The reason we apply the IoU measure on railway tracks is that railway tracks tend to be straight and long, so that, unlike for buildings, it is difficult to define keypoints.
Note that IoU is very sensitive for thin structures such as railroads.
\begin{table}[t]
\centering\footnotesize
\begin{floatrow}
\capbtabbox{%
\setlength{\tabcolsep}{2pt}
\begin{tabular}{crrlccr}
\toprule
Model & Data & Noise      &         & SMM          & Consist.     & IoU \\
\midrule
A     & 3k   & 0\%        & ---     & $\times$     & $\times$     & 0.321\\
B     & 3k   & 0\%        & ---     & $\checkmark$ & $\times$     & 0.425\\
C     & 3k   & 0\%        & ---     & $\checkmark$ & $\checkmark$ & 0.436\\
D     & 3k   & 20\%       & Synth.  & $\checkmark$ & $\times$     & 0.404\\
E     & 3k   & 20\%       & Synth.  & $\checkmark$ & $\checkmark$ & 0.429\\
F     & 3k   & 40\%       & Synth.  & $\checkmark$ & $\times$     & 0.369\\
G     & 3k   & 40\%       & Synth.  & $\checkmark$ & $\checkmark$ & 0.381\\
H     & 20k  & $\sim$40\% & Natural & $\checkmark$ & $\times$     & 0.417\\
I     & 20k  & $\sim$40\% & Natural & $\checkmark$ & $\checkmark$ & 0.435\\
J     & 35k  & $\sim$40\% & Natural & $\checkmark$ & $\checkmark$ &  0.445\\
\bottomrule
\end{tabular}%
}{\vspace{-1em}%
\caption{Railway tracks dataset results. 
The SMM and Consist.\  refers to the spatial memory map and consistency loss respectively.}\label{Fig.results}%
}
\ffigbox{%
\includegraphics[width=.5\textwidth]{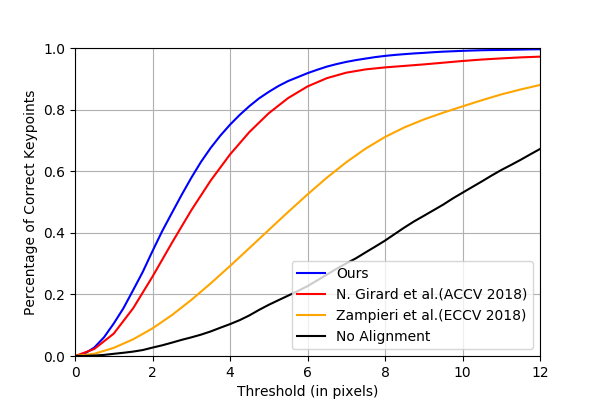}%
}{%
\caption{INRIA buildings dataset results. We outperform all recent works; from around 10 pixels threshold, our result is 100\% (i.e.~it cannot be improved further).}\label{Fig.align_result}%
}
\end{floatrow}
\end{table}

\subsection{Railway tracks results}
\paragraph{Synthetic annotation noise.} 
In the following, we use the $4,\!000$ images with ground-truth (i.e.\ correct) annotations, 
split as $3,\!000$ for training the \emph{AutoCorrect} network and $1,\!000$ for testing.
With these image-annotation pairs, 
we aim to perform controlled experiments on evaluating the effectiveness of the proposed components.
First, we assess the effectiveness of the
spatial memory map by training our model using only the $3,\!000$ noise-free annotations and the self-supervised loss.  
Then, to evaluate the robustness of the consistency loss against different levels of noise,  
we intentionally replace the noise-free annotations with perturbed ones in training set, 
and train three sets of models, with  resp.~$0\%,20\%,40\%$ noisy annotations,  and using or not using the consistency loss.  
During the testing stage, we artificially perturb the $1,\!000$ testing images three  times, 
and apply our models to correct the perturbed  testing annotations.
All artificial perturbations are composed of a random translation up to 25px in each direction and a random rotation up to 5 degrees~(clockwise or anti-clockwise).

As shown in \Cref{Fig.results}, models A-G are trained on only 3k images with noise-free labels or with the injection of synthetic noise in part of those. H, I, J are trained on real annotation noise.
{First}, to show that our \emph{spatial memory map} plays an important role in the instance alignment, we compare models A and B:
the performance gap is significant~($0.321$ vs. $0.425$ IoU), as the spatial memory map gives important contextual information.
{Second}, comparing models B and C shows that the consistency loss is beneficial even when training on the noise-free subset of the data.
We conjecture that this is because the consistency loss acts as a regularizer.
{Third}, to verify the effectiveness of the consistency loss in dealing with noisy data, we note that as the noise ratio is increased~(models D and F), the performance of the model that uses only the self-supervised loss starts to drop dramatically~($0.404$ and $0.369$ IoU);
however, the transformation consistency loss improves the robustness to noise significantly~(models E and G, $0.429$ and $0.381$ IoU).
Note that, when the noise ratio is around $20\%$, model E actually performs about as well as model C, which learned on noise-free annotations.
This shows that models trained with transformation consistency can discount almost entirely moderate amounts of noise.

\paragraph{Natural annotation noise.}
After demonstrating the concept in these controlled experiments, we now 
train the network using the entire dataset (which we estimate to contain $40\%$ of labels with significant geometric distortion), using either $20$k or $35$k images and switching the consistency loss on and off to test its effectiveness once more.
Similar to synthetic annotation noise, we artificially perturb the 1,000 testing images to evaluate models trained on natural annotation noise.
The models I and J ($20$k/$35$k images, $0.435$/$0.445$ IoU) show that, even with substantial real annotation noise ($\sim$40\%), our model reaches similar or superior performance to using a manually filtered dataset (C, 3k images, 0.436 IoU) with no annotation noise.
The advantage is that, while datasets I and J are large, they are obtained ``for free'' without any manual filtering.

\subsection{INRIA buildings dataset results}
To evaluate our alignment method on the \emph{INRIA buildings}  dataset, 
we follow the standard testing protocol introduced in ~\cite{Girard18,Zampieri18} by randomly and independently perturbing the accurate annotations on $3$ images of the city of San Francisco.
In contrast to generating displacement maps in~\cite{Girard18,Zampieri18}, we consider instance-level transformations.
The testing labels are generated by randomly and independently perturbing the accurate annotation instances 
to  achieve an error comparable to that of~\cite{Girard18,Zampieri18}. 
As shown in \Cref{Fig.align_result}, the \emph{AutoCorrect} approach outperforms all previous methods at all thresholds (in pixels).
This is because our method outputs transformation parameters for each instance independently, whereas prior works outputs a displacement field map, which is less expressive.
Furthermore, our consistency loss also works as a form of data augmentation which counters the small size of the \emph{INRIA buildings} dataset, further improving the performance.

Note that we learn to correct random and \emph{different} perturbations of objects that co-occur in the same image,
therefore, our proposed local~(per-object) correction is a better match to the type of errors observed in practice in aerial datasets as the location of the shifted annotations can be random and uncorrelated.
\subsection{Qualitative results}
As label noise of Satellite imagery is random, each instance label must be considered and corrected individually.
Our \emph{AutoCorrect} models deal with geometric alignments on an 
arbitrary number of instances, by aligning each instance sequentially.  
\Cref{fig.progression} shows the \emph{AutoCorrect} correction progression on testing data. 
Since the noise of each label is random, our model handles  it by iteratively correcting each semantic labels individually.
\Cref{fig.build_align1} shows  the  \emph{AutoCorrect} final predictions on a number of examples
from the test data. 

\begin{figure*}[!htb]
\centering\includegraphics[width=0.93\textwidth]{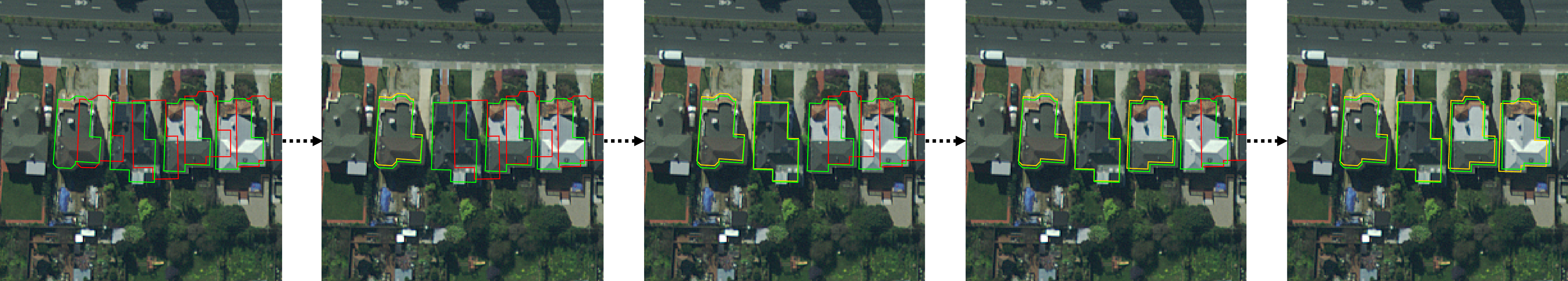}
\vspace{-1.2em}
\caption{Correction progression. 
Red polygons refer to noisy labels, green to noise-free labels, and yellow are our predictions. 
Noisy annotations are cleaned inductively.} \label{fig.progression}
\end{figure*}
\begin{figure*}[!htb]
\centering
\subfigure[]{\includegraphics[width=0.23\textwidth]{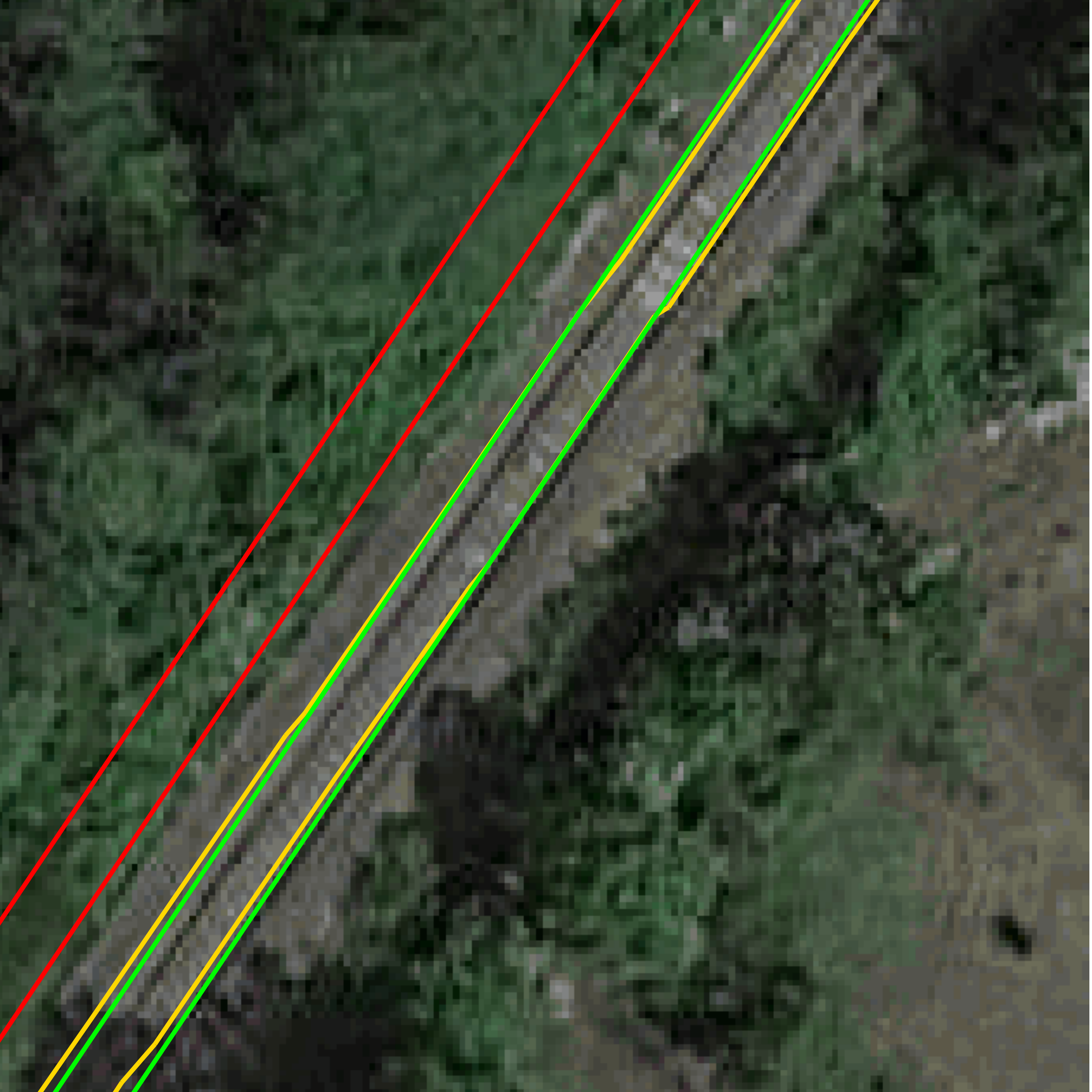}\label{Fig.rail4}}
\subfigure[]{\includegraphics[width=0.23\textwidth]{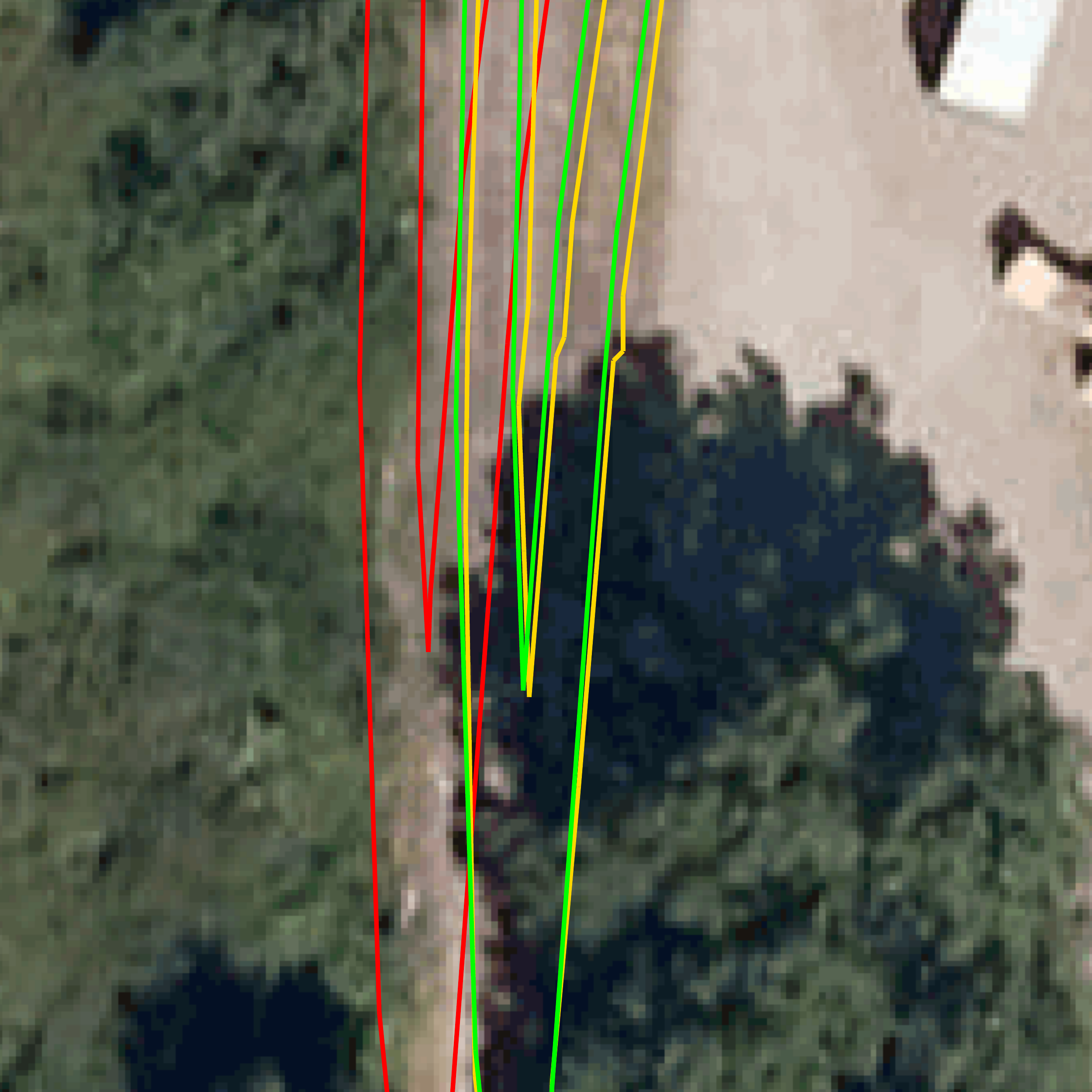}\label{Fig.rail3}}
\subfigure[]{\includegraphics[width=0.23\textwidth]{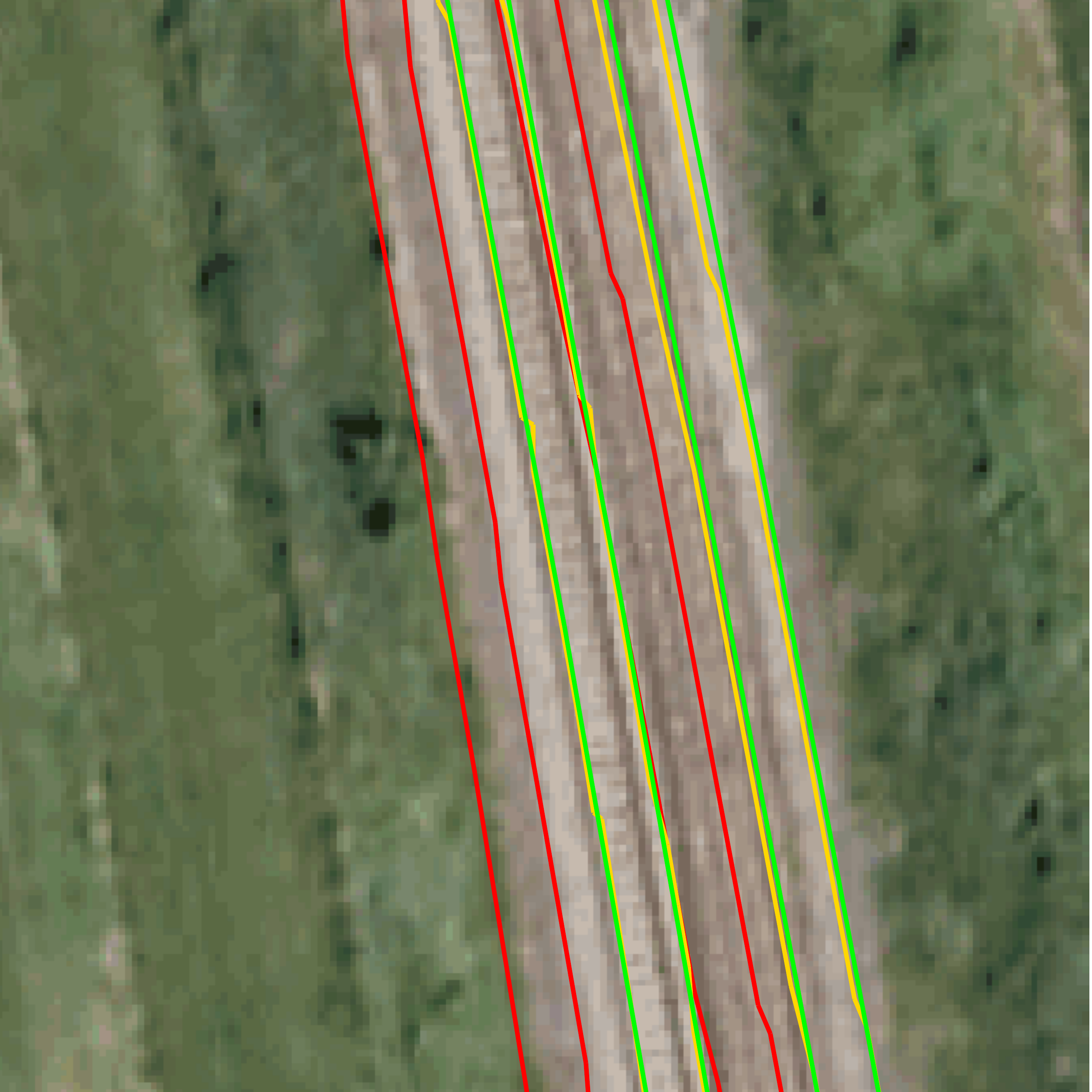}\label{Fig.rail1}}
\subfigure[]{\includegraphics[width=0.23\textwidth]{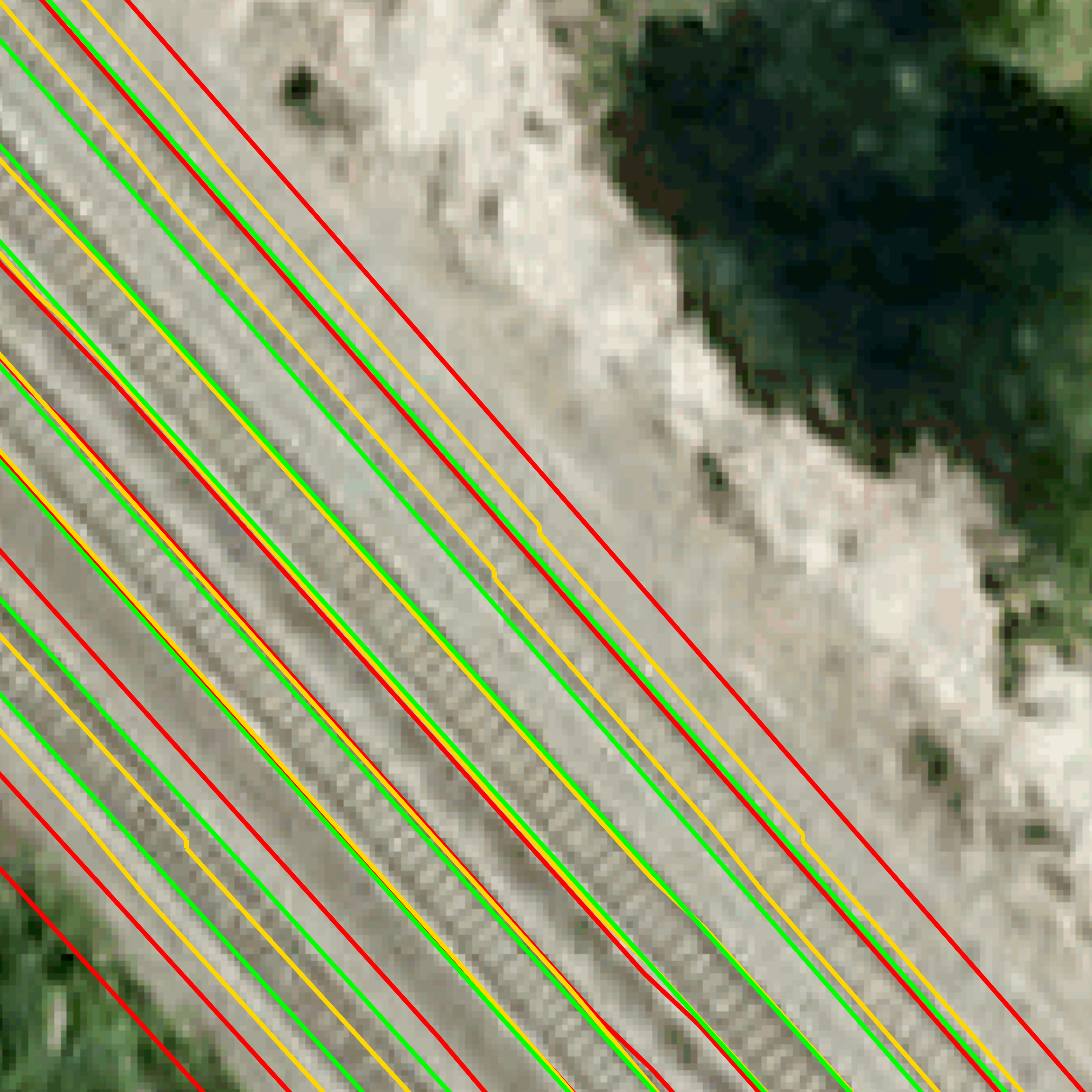}\label{Fig.rail2}}
\subfigure[]{\includegraphics[width=0.23\textwidth]{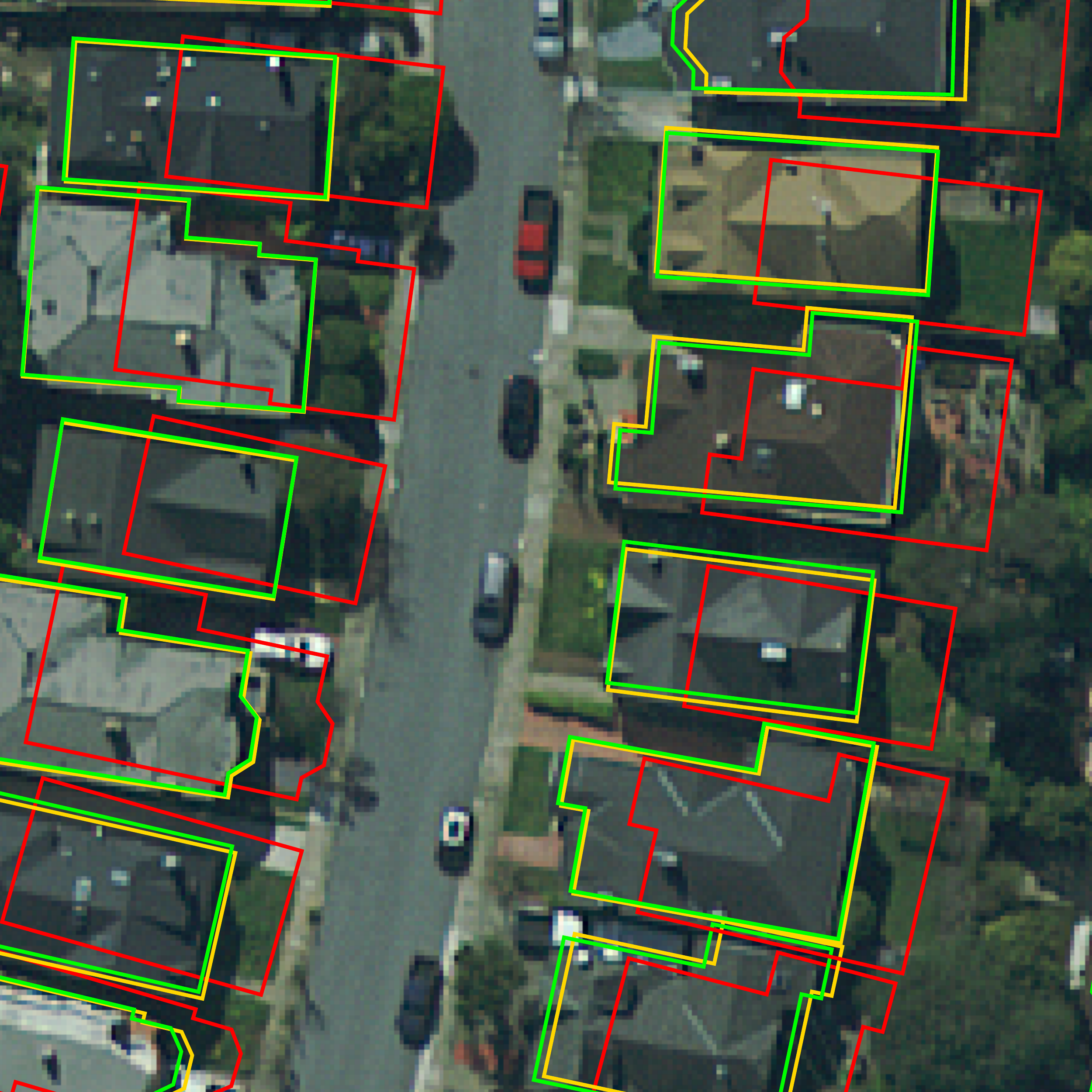}\label{Fig.build1}}
\subfigure[]{\includegraphics[width=0.23\textwidth]{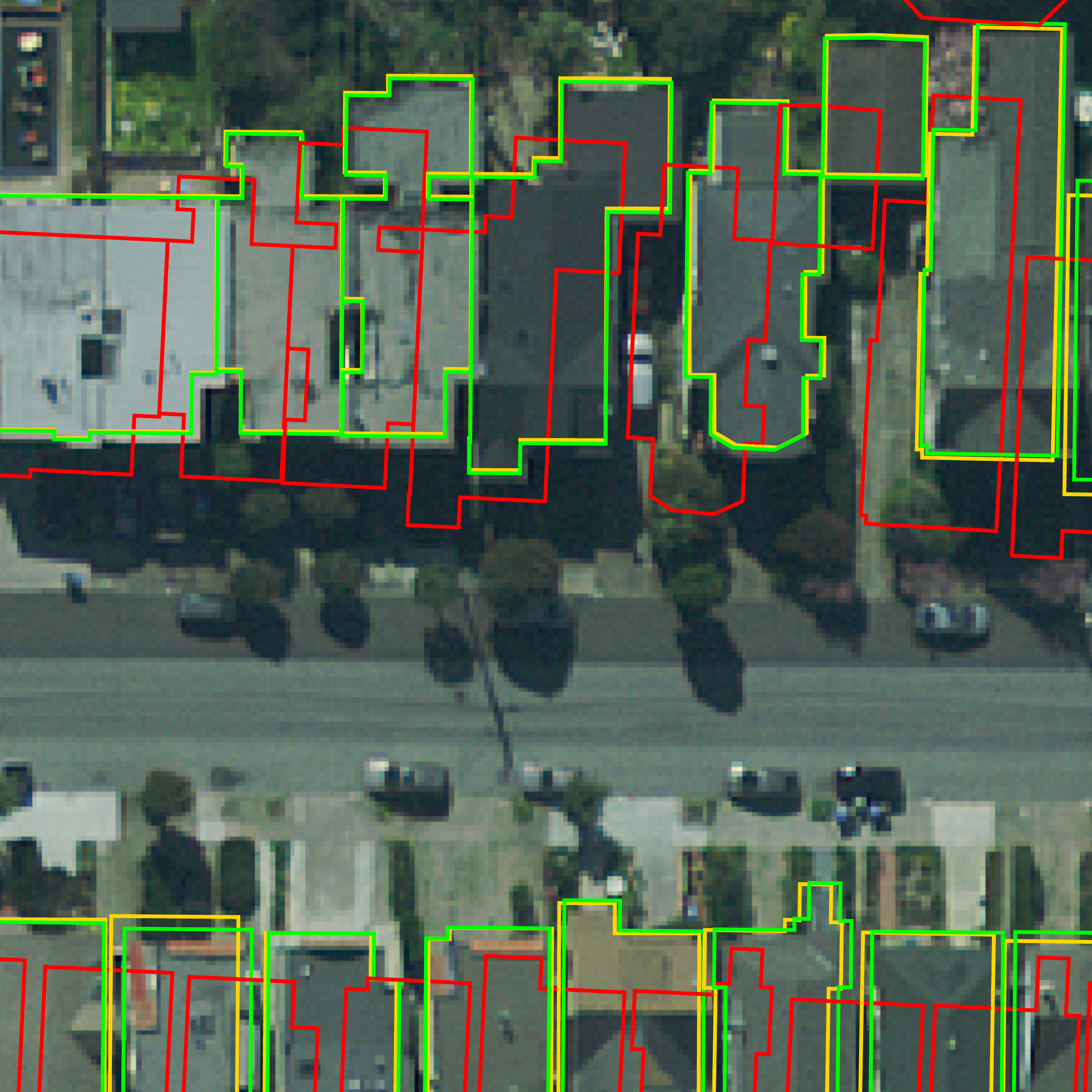}\label{Fig.build2}}
\subfigure[]{\includegraphics[width=0.23\textwidth]{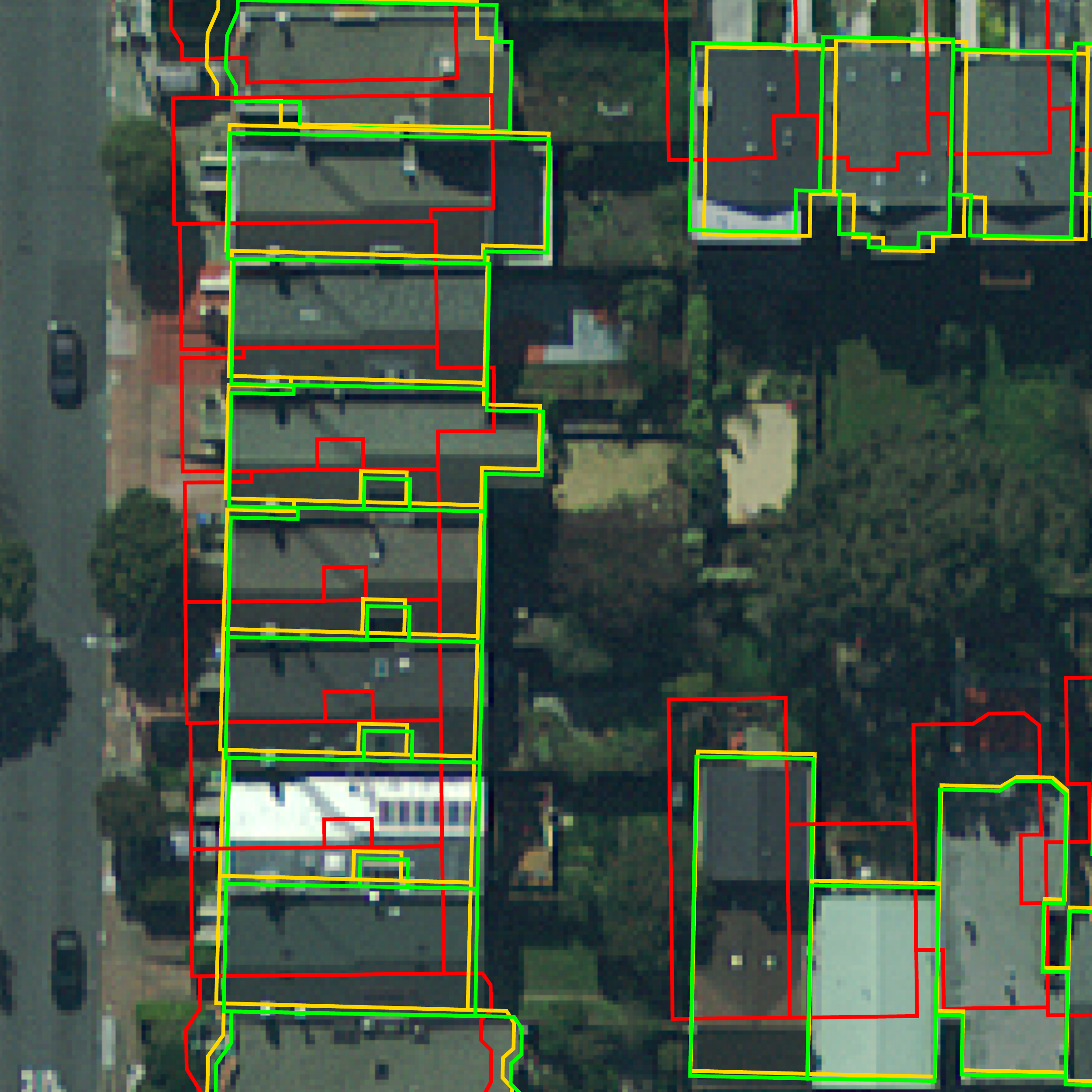}\label{Fig.build4}}
\subfigure[]{\includegraphics[width=0.23\textwidth]{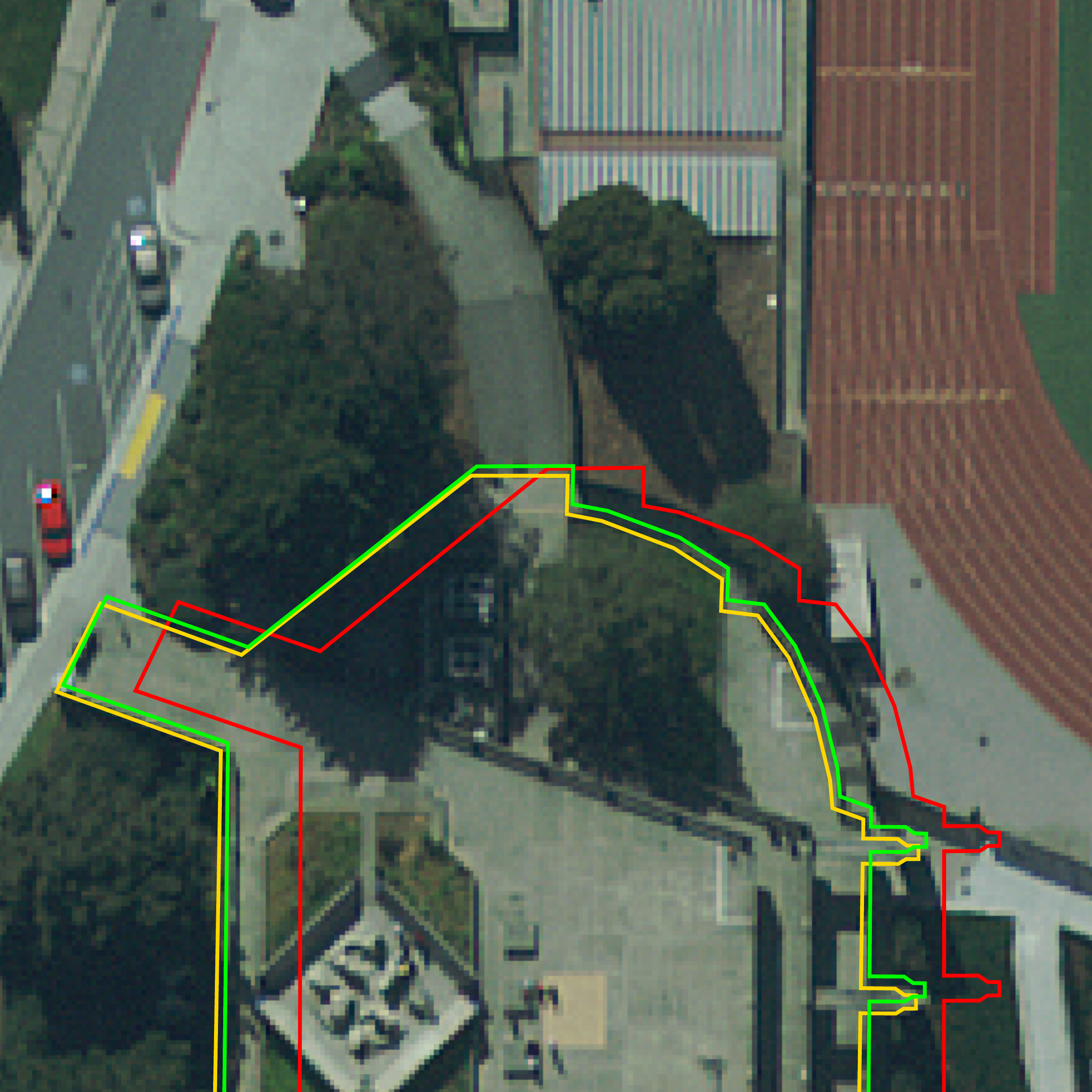}\label{Fig.build3}}
\vspace{-1.5em}
\caption{
Alignment results  for the \emph{Railway
tracks} dataset examples (top row), and \emph{INRIA
buildings} dataset (bottom row). 
The label noise of each instance (denoted in red) is random \ie local transformation of each instance is needed. Our predictions (denoted in yellow) achieve accurate correction comparing to ground truth (Green), by predicting a transformation on each instance.
In reality, \emph{AutoCorrect} can correct both noisy instance with regular shape (a), 
as well as instances with complex shapes ((b) \& (h)).
Figures (c) and (d) 
illustrate the capability of correcting an arbitrary number of
instances.}.
\label{fig.build_align1}     
\end{figure*}

\vspace{-12pt}
\section{Conclusion}\label{Conclusion}

The \emph{AutoCorrect} method is based on three ideas:
a  spatial memory map that enables annotations to be adjusted sequentially
while taking into account the other annotations and their corrections,
a consistency loss that enables the model to be trained without the
knowledge of any noise-free annotation, and a self-supervised loss
that generates training data automatically.  \emph{AutoCorrect} outperforms
previously-published works and can
learn to correct almost for free from a large dataset where 40\% of the annotations
are heavily distorted, and obtain results  that are comparable to
approaches that require noise-free annotations.  Finally, we have
introduced the new \emph{Railway tracks} benchmark.

\paragraph{Acknowledgement.}
We thank Kai Han, Erika Lu and Tengda Han for proofreading. Financial support was provided by 
the EPSRC Programme Grant Seebibyte EP/M013774/1.

\bibliography{shortstrings,vgg_local,vgg_other}

\begin{thebibliography}{31}
\providecommand{\natexlab}[1]{#1}
\providecommand{\url}[1]{\texttt{#1}}
\expandafter\ifx\csname urlstyle\endcsname\relax
  \providecommand{\doi}[1]{doi: #1}\else
  \providecommand{\doi}{doi: \begingroup \urlstyle{rm}\Url}\fi

\bibitem[Alshehhi et~al.(2017)Alshehhi, Marpu, Woon, and Mura]{Alshehhi17}
R.~Alshehhi, P.R Marpu, W.L Woon, and M.D Mura.
\newblock Simultaneous extraction of roads and buildings in remote sensing
  imagery with convolutional neural networks.
\newblock \emph{ISPRS Journal of Photogrammetry and Remote Sensing}, 2017.

\bibitem[Berrada et~al.(2018)Berrada, Zisserman, and Kumar]{Berrada18}
L.~Berrada, A.~Zisserman, and M.~P. Kumar.
\newblock Smooth loss functions for deep top-k classification.
\newblock In \emph{International Conference on Learning Representations}, 2018.

\bibitem[Cai et~al.(2017)Cai, Shin, and D.]{Cai17}
J.~Cai, R.~Shin, and Song D.
\newblock Making neural programming architectures generalize via recursion.
\newblock In \emph{Proc. ICLR}, 2017.

\bibitem[Carreira et~al.(2016)Carreira, Agrawal, Fragkiadaki, and
  Malik]{Carreira15}
J.~Carreira, P~Agrawal, K.~Fragkiadaki, and J.~Malik.
\newblock Human pose estimation with iterative error feedback.
\newblock \emph{Proc. CVPR}, 2016.

\bibitem[Chatfield et~al.(2014)Chatfield, Simonyan, Vedaldi, and
  Zisserman]{Chatfield14}
K.~Chatfield, K.~Simonyan, A.~Vedaldi, and A.~Zisserman.
\newblock Return of the devil in the details: Delving deep into convolutional
  nets.
\newblock In \emph{Proc. BMVC.}, 2014.

\bibitem[Chen and Gupta(2017)]{Chen17a}
X.~Chen and A.~Gupta.
\newblock Spatial memory for context reasoning in object detection.
\newblock In \emph{Proc. ICCV}, 2017.

\bibitem[Ghosh et~al.(2017)Ghosh, Kumar, and Sastry]{Ghosh17}
A.~Ghosh, H.~Kumar, and P.~Sastry.
\newblock Robust loss functions under label noise for deep neural networks.
\newblock In \emph{AAAI}, 2017.

\bibitem[Girard et~al.(2018)Girard, Charpiat, and Tarabalka]{Girard18}
N.~Girard, G.~Charpiat, and Y.~Tarabalka.
\newblock Aligning and updating cadaster maps with aerial images by multi-task,
  multi-resolution deep learning.
\newblock In \emph{Proc. ACCV}, 2018.

\bibitem[Google(2017)]{GoogleMap}
Google.
\newblock Google {Maps}.
\newblock \url{https://www.google.co.uk/maps}, 2017.

\bibitem[Gupta et~al.(2018)Gupta, Vedaldi, and Zisserman]{Gupta18}
A.~Gupta, A.~Vedaldi, and A.~Zisserman.
\newblock Inductive visual localisation: Factorised training for superior
  generalisation.
\newblock In \emph{Proc. BMVC.}, 2018.

\bibitem[Hoffman et~al.(2018)Hoffman, Tzeng, Park, Zhu, Isola, Saenko, Efros,
  and Darrell]{Hoffman18}
J.~Hoffman, E.~Tzeng, T.~Park, J.~Zhu, P.~Isola, K.~Saenko, A.~Efros, and
  T.~Darrell.
\newblock Cycada: Cycle consistent adversarial domain adaptation.
\newblock In \emph{Proc. ICML}, 2018.

\bibitem[Hu et~al.(2007)Hu, Razdan, Femiani, Cui, and Wonka]{Hu2007}
J.~Hu, A.~Razdan, J.~C. Femiani, M.~Cui, and P.~Wonka.
\newblock Road network extraction and intersection detection from aerial images
  by tracking road footprints.
\newblock \emph{IEEE Transactions on Geoscience and Remote Sensing}, 2007.

\bibitem[Kowalski et~al.(2017)Kowalski, Naruniec, and Trzcinski]{Kowalski17}
M.~Kowalski, J.~Naruniec, and T.~Trzcinski.
\newblock Deep alignment network: {A} convolutional neural network for robust
  face alignment.
\newblock \emph{IEEE Conference on Computer Vision and Pattern Recognition
  Workshops}, 2017.

\bibitem[Laptev et~al.(2000)Laptev, Mayer, Lindeberg, Eckstein, Steger, and
  Baumgartner]{Laptev2000}
I.~Laptev, H.~Mayer, T.~Lindeberg, W.~Eckstein, C.~Steger, and A.~Baumgartner.
\newblock Automatic extraction of roads from aerial images based on scale space
  and snakes.
\newblock \emph{Machine Vision and Applications}, 2000.

\bibitem[Li et~al.(2018)Li, Chen, and Koltun]{Li18}
Z.~Li, Q.~Chen, and V.~Koltun.
\newblock Interactive image segmentation with latent diversity.
\newblock In \emph{Proc. CVPR}, 2018.

\bibitem[Mnih and Hinton(2012)]{Mnih12}
V.~Mnih and G.~E. Hinton.
\newblock Learning to label aerial images from noisy data.
\newblock In \emph{Proc. ICML}, 2012.

\bibitem[Oberweger et~al.(2015)Oberweger, Wohlhart, and Lepetit]{Oberweger15}
M.~Oberweger, P.~Wohlhart, and V.~Lepetit.
\newblock Training a feedback loop for hand pose estimation.
\newblock In \emph{Proc. ICCV}, 2015.

\bibitem[{OpenStreetMap contributors}(2017)]{OpenStreetMap}
{OpenStreetMap contributors}.
\newblock {Planet dump retrieved from https://planet.osm.org }.
\newblock \url{ https://www.openstreetmap.org }, 2017.

\bibitem[Patrini et~al.(2017)Patrini, Rozza, Menon, Nock, and Qu]{Patrini17}
G.~Patrini, A.~Rozza, A.~K. Menon, R.~Nock, and L.~Qu.
\newblock Making deep neural networks robust to label noise: A loss correction
  approach.
\newblock In \emph{Proc. CVPR}, 2017.

\bibitem[Reed and Freitas(2016)]{Reed16a}
S.~Reed and N.~Freitas.
\newblock Neural programmer-interpreters.
\newblock In \emph{Proc. ICLR}, 2016.

\bibitem[Romera{-}Paredes and Torr(2015)]{Romera-Paredes15}
B.~Romera{-}Paredes and P.~H.~S. Torr.
\newblock Recurrent instance segmentation.
\newblock In \emph{Proc. ECCV}, 2015.

\bibitem[Saito et~al.(2016)Saito, Yamashita, and Aoki]{Saito16}
S~Saito, Y~Yamashita, and Y~Aoki.
\newblock Multiple object extraction from aerial imagery with convolutional
  neural networks.
\newblock \emph{Journal of Imaging Science and Technology}, 2016.

\bibitem[Sundaram et~al.(2010)Sundaram, Brox, and Keutzer]{Sundaram10}
N.~Sundaram, T.~Brox, and K.~Keutzer.
\newblock Dense point trajectories by {GPU-accelerated} large displacement
  optical flow.
\newblock In \emph{Proc. ECCV}, 2010.

\bibitem[Veit et~al.(2017)Veit, Alldrin, Chechik, Krasin, Gupta, , and
  Belongie]{Veit17}
A.~Veit, N.~Alldrin, G.~Chechik, I.~Krasin, A.~Gupta, , and S.~J. Belongie.
\newblock Learning from noisy large-scale datasets with minimal supervision.
\newblock In \emph{Proc. CVPR}, 2017.

\bibitem[Wang et~al.(2014)Wang, Huang, Ovsjanikov, and Guibas]{Wang14}
F.~Wang, Q.~Huang, M.~Ovsjanikov, and L.J. Guibas.
\newblock Unsupervised multi-class joint image segmentation.
\newblock In \emph{Proc. CVPR}, 2014.

\bibitem[Xiao et~al.(2015)Xiao, Xia, Yang, Huang, and Wang]{Xiao15}
T.~Xiao, T.~Xia, Y.~Yang, C.~Huang, and X.~Wang.
\newblock Learning from massive noisy labeled data for image classification.
\newblock In \emph{Proc. CVPR}, 2015.

\bibitem[Zampieri et~al.(2018)Zampieri, Charpiat, Girard, and
  Tarabalka]{Zampieri18}
A.~Zampieri, G.~Charpiat, N.~Girard, and Y.~Tarabalka.
\newblock Multimodal image alignment through a multiscale chain of neural
  networks with application to remote sensing.
\newblock In \emph{Proc. ECCV}, 2018.

\bibitem[Zaremba et~al.(2016)Zaremba, Mikolov, Joulin, and Fergus]{Zaremba16}
W.~Zaremba, T.~Mikolov, A.~Joulin, and R.~Fergus.
\newblock Learning simple algorithms from examples.
\newblock In \emph{Proc. ICML}, 2016.

\bibitem[Zhou et~al.(2016)Zhou, Philipp, Aubry, Huang, and Efros]{Zhou16}
T.~Zhou, K.~Philipp, M.~Aubry, Q.~Huang, and A.~Efros.
\newblock Learning dense correspondence via 3d-guided cycle consistency.
\newblock In \emph{Proc. CVPR}, 2016.

\bibitem[Zhou et~al.(2015)Zhou, Zhu, and Daniilidis]{Zhou15a}
X.~Zhou, M.~Zhu, and K.~Daniilidis.
\newblock Multi-image matching via fast alternating minimization.
\newblock In \emph{Proc. ICCV}, 2015.

\bibitem[Zhu et~al.(2017)Zhu, Park, Isola, and Efros]{Zhu17}
J.Y. Zhu, T.~Park, P.~Isola, and A.~Efros.
\newblock Unpaired image-to-image translation using cycle-consistent
  adversarial networks.
\newblock In \emph{Proc. ICCV}, 2017.

\end{thebibliography}
\end{document}